\documentclass{article}


\usepackage[preprint, nonatbib]{neurips_2023}




\usepackage[utf8]{inputenc} 
\usepackage[T1]{fontenc}    
\usepackage{hyperref}       
\usepackage{url}            
\usepackage{booktabs}       
\usepackage{amsfonts}       
\usepackage{nicefrac}       
\usepackage{microtype}      
\usepackage[dvipsnames]{xcolor}         

\usepackage{amsmath}
\usepackage{amssymb}
\usepackage{amsthm}
\usepackage{mathtools}
\usepackage{cleveref}
\usepackage{subfigure}

\newtheorem{theorem}{Theorem}[section]

\newtheorem{conjecture}[theorem]{Conjecture}

\theoremstyle{definition}
\newtheorem{definition}{Definition}[section]

\theoremstyle{remark}

\def\*#1{\boldsymbol{\mathbf{#1}}}

\DeclareMathOperator*{\argmin}{arg\,min}

\newcommand{\Trp}{{\mathsf{T}}}

\newcommand{\DKL}[2]{\operatorname{D_{\text{KL}}}\!\left(#1\,\middle\| \,#2\right)}

\usepackage{tikz}
\usetikzlibrary{calc}
\usetikzlibrary{math}

\usepackage{pgfplots}
\pgfplotsset{compat=1.17} 
\usepackage{pgfplotstable}

\usepackage{cbcolors}

\newcommand{\norm}[1]{\left\lVert#1\right\rVert}

\title{Small-data Reduced Order Modeling of Chaotic Dynamics through SyCo-AE: Synthetically Constrained Autoencoders}

%

\author{{Andrey A. Popov}\\
	Oden Institute for Computational \\
Engineering \& Sciences\\
	The University of Texas at Austin\\
	Austin, TX 78712 \\
	\texttt{andrey.a.popov@utexas.edu} \\
	\And
	{Renato Zanetti} \\
        Dept. of Aerospace Engineering and \\
        Engineering Mechanics\\
	The University of Texas at Austin\\
	Austin, TX 78712 \\
	\texttt{renato@utexas.edu} \\
}

\begin{document}

\maketitle

\begin{abstract}
Data-driven reduced order modeling of chaotic dynamics can result in systems that either dissipate or diverge catastrophically.
    Leveraging non-linear dimensionality reduction of autoencoders and the freedom of non-linear operator inference with neural-networks, we aim to solve this problem by imposing a synthetic constraint in the reduced order space.
    The synthetic constraint allows our reduced order model both the freedom to remain fully non-linear and highly unstable while preventing divergence.
    We illustrate the methodology with the classical 40-variable Lorenz '96 equations, showing that our methodology is capable of producing medium-to-long range forecasts with lower error using less data.
\end{abstract}

\section{Introduction}

The use of machine learning methods in many scientific disciplines is hampered by the availability of data~\cite{kitchin2015small}, resulting in the small-data problem.
Knowledge-guided~\cite{karpatne2022knowledge} machine learning~\cite{aggarwal2018neural,goodfellow2016deep} aims to, in part, solve this problem by augmenting data driven methods with \textit{a priori} knowledge about the true system behavior. 
This prior knowledge can act as a regularizer, which, for instance, can restricting neural network outputs to remain physically consistent.

Chaotic systems---commonplace in numerical weather prediction and data assimilation applications~\cite{kalnay2003atmospheric,reich2015probabilistic,asch2016data}---have the peculiar property of both magnifying small perturbations and damping them at the same time~\cite{strogatz2018nonlinear,guckenheimer2013nonlinear}. Because of this, the dynamics of chaotic systems are hard to predict, being linearly unstable on average, while the limit set of the dynamics, the attractor, remains compact~\cite{farmer1983dimension}, resulting in quasi-periodic behavior.
In practice all the points of the attractor cannot be known: it is possible that there is very limited data about the attractor both temporally and spatially. 
These attractors often possess complicated topological properties~\cite{rand1978topological} and are difficult to work with for high dimensional systems,.

Reduced order modeling (ROM)~\cite{benner2017model} combines the ideas of dimensionality reduction and the idea of operator inference~\cite{brunton2022data} in order to build efficient models of dynamical systems.
In the world on machine learning dimensionality reduction is frequently performed through the use of autoencoders~\cite{weng2018VAE,popov2022meta,goodfellow2016deep}, which are the focus of this work.

In this work, we make the simple assumption that we can construct an autoencoder and reduced order dynamics such that the chaotic attractor can be embedded into a simple-to-define set in reduced order space. 
This can be achieved by imposing a \textit{synthetic constraint} on both the autoencoder and the reduced order dynamics.
If the synthetic constraint defines a compact set, we can guarantee that the reduced order dynamics stay bounded while simultaneously having all the non-linear freedom that generalized function approximators allow.
The synthetic constraint therefore becomes a \textit{simple and known} transformation of some 
\textit{unknown} constraint of the full model, and, acts as a proxy for our knowledge about the behavior of chaotic systems.

This work aims to build reduced order models of chaotic systems by combining autoencoders,  fully non-linear operator inference, and synthetic constraints.
We call this framework synthetically constrained autoencoders, shortened to SyCo-AE. 
We provide a theoretical justification for the derivation of the SyCo-AE through the construction of the strong and weak preservation properties.
We show how an ideal reduced order model satisfies the strong preservation property, and how the SyCo-AE is built explicitly to preserve the weak preservation property.
We additionally provide a novel numerical method for training this reduced order model, by embedding the solution of a constrained differential equation into the cost function.

We apply the SyCo-AE framework to the Lorenz '96 equations~\cite{lorenz1996predictability}, which is a common medium-scale highly-chaotic test problem. 
Our results show, both qualitatively and quantitatively, that the SyCo-AE framework is capable of producing a reduced order model that can produce reliable medium-range forecasts of chaotic dynamics.


\section{Background}

Reduced order modeling combines two concepts:
\textit{dimensionality reduction}, and \textit{operator inference}. The former primarily deals with preserving \textit{spatial} features of the dynamical system through a compressed representation while the latter focuses on preserving the \textit{temporal} features of the dynamical system.

We now describe dimensionality reduction from the point of view of autoencoders.
The autoencoder,
\begin{equation}\label{eq:autoencoder}
    \begin{aligned}
        u &= \theta(x),\\
        x&\approx \widetilde{x} = \phi(u),
    \end{aligned}
\end{equation}
aims to take information given by the high-dimensional state $x$, and distill it down to some useful lower-dimensional representation $u$, then back out again into a reconstruction $\widetilde{x}$.
More formally, assume that $x$ is an element of the full order space $\mathbb{X}$ which is a subset of $n$-dimensional Euclidean space $\mathbb{X}\subset \mathbb{R}^n$.     
The encoder is a function $\theta: \mathbb{R}^n \xrightarrow[]{} \mathbb{R}^r$, with the reduced order space defined by the image of the full order space under the encoder, $\theta(\mathbb{X}) = \mathbb{U}$,  which itself is a subset of $r$-dimensional Euclidean space,
$\mathbb{U} \subset \mathbb{R}^r$. 
The decoder is a function $\phi: \mathbb{R}^r \xrightarrow[]{} \mathbb{R}^n$, with the reconstruction space defined by the image of the reduced order space under the decoder $\phi(\mathbb{U}) = \widetilde{\mathbb{X}}$.

The encoder and decoder in \cref{eq:autoencoder} are functions that are required to have additional properties for dimensionality reduction to be valid. A necessary, but not sufficient condition for valid dimensionality reduction on the encoder-decoder pair is that of right-invertibility~\cite{popov2022meta},
\begin{equation}\label{eq:right-invertibility}
    \theta(\phi(\theta(x))) = \theta(x), \quad \forall x\in\mathbb{X},
\end{equation}
over all the data. This property makes sure that there is no loss of information from the reduced order representation  $u$ in the reconstruction $\widetilde{x}$, as it would encode into the exact same $u$.

The true state that we are trying to model is assumed to come from some continuous dynamical system defined by the differential equation,
\begin{equation}\label{eq:full-order-ODE}
\begin{gathered}
        \frac{\mathrm{d} x}{\mathrm{d} t} = F(x),
\end{gathered}
\end{equation}
with $F$ representing the (possibly highly non-linear) dynamics. The goal of reduced order modeling is to find a way to approximate the \textit{full order model}~\cref{eq:full-order-ODE} in the reduced space defined by the autoencoder~\cref{eq:autoencoder}.

In \textit{intrusive} reduced order modeling, the full order model~\cref{eq:full-order-ODE} is known explicitly and is used in order to take advantage of all available information.
However when~\cref{eq:full-order-ODE} is not known, or when our knowledge about it is severely deficient, intrusive methods cannot be relied upon. In this work we focus on \textit{non-intrusive} methods that do not have access to the full order model, and must rely only on data.

Given the state of the dynamics at time index $i$ in the reduced space, denoted by $u_i$, the goal is to find the state of the dynamics at time index $i+1$, denoted by $u_{i+1}$. Conceptually $u_{i+1}$ can be approximated in two major ways. The first is that of \textit{flow maps}~\cite{qin2019data}, where a simple function that maps one to the other is learned,
\begin{equation}\label{eq:flow-map}
    u_{i+1} \approx \mathcal{F}(u_i).
\end{equation}
While conceptually simple, this approach does not lend well to continuous dynamical systems, as the attractors for discrete and continuous dynamics do not behave in the same manner~\cite{guckenheimer2013nonlinear}.
The alternate approach, which we utilize in this paper is that of \textit{operator inference}, whereby the action of the continuous time dynamics in the reduced space is learned explicitly, 
\begin{equation}\label{eq:reduced-order-IVP}
        \frac{\mathrm{d} u}{\mathrm{d} t} = f(u),\quad u(t_i) = u_i,\quad t\in[t_i, t_{i+1}],
\end{equation}
where $f$ is some (potentially highly non-linear) function defining the dynamics in the reduced space, $t$ is the time of the full order dynamics, and $u_i$ is the initial condition. The solution of the initial value problem~\cref{eq:reduced-order-IVP} can be performed with a wide array of algorithms~\cite{hairer1991solving}.


The most straight-forward autoencoder-based approach to reduced order modeling of continuous dynamics take a fully non-linear autoencoder-based dimensionality reduction method coupled to a fully neural-network-based operator inference step,
\begin{equation}\label{eq:autoencoder-ROM}
    \begin{gathered}
        u = \theta(x),\quad \widetilde{x} = \phi(u),\\
        \frac{\mathrm{d} u}{\mathrm{d} t} = f(u),
    \end{gathered}
\end{equation}
where $\theta$ and $\phi$ are the autoencoder~\cref{eq:autoencoder}, and $f$ is a full non-linear approximation of the $f$ found in~\cref{eq:reduced-order-IVP}.
In this work we augment the approach in~\cref{eq:autoencoder-ROM}.


\subsection{Other non-intrusive methods}
\label{sec:previous-methods}

We now discuss some previous non-intrusive methods, that this work compares against.

One method for performing non-intrusive reduced order modeling is dynamic mode decomposition (DMD)~\cite{brunton2022data,kutz2016dynamic}.
Given a set of data points, $\{(x_i, x_{i+1})\}_i$, DMD finds the best rank-$r$ linear operator $\mathbf{A}$ that transports the data from time index $i$ to time index $i+1$, given by  $x_{i+1} \approx \mathbf{A}x_i$. 
Taking the  eigendecomposition of $\mathbf{A}$, given by
$\*A\*\Phi = \*\Phi\*\Lambda$, we can define the reduced order model in the following way,
\begin{equation}\label{eq:DMD-ROM}
\begin{gathered}
    u = \theta(x) = \*\Phi^\dagger x,\quad \widetilde{x} = \phi(u) = \*\Phi u,\\
    \frac{\mathrm{d} u}{\mathrm{d} t} = \*\Omega u,\quad \*\Omega = \frac{1}{\Delta t}\log\*\Lambda,
\end{gathered}
\end{equation}
where $\*\Phi$ is our linear decoder, its pseudo-inverse, $\*\Phi^\dagger$ is the encoder, and $\*\Omega$ is the linear operator defining the time dynamics. The initial value problem~\cref{eq:DMD-ROM} has a linear analytic solution, which is used in this work.

The advantage of linear methods is that they require few data to converge to a useful solution. In general, the amount of data required to use a linear method is on the order of the reduced dimension $r$ and significantly less than the dimension of the full order model $n$.

However linear methods fail to produce useful results for highly non-linear systems and are incapable of modeling chaotic dynamics. As such, non-linear  reduced order modeling methods are required.

Moving away from linear methods, a recent more-than-linear approach is that of quadratic manifolds~\cite{geelen2023operator}, which requires $\mathcal{O}(r^2)$ data points to construct,
\begin{equation}\label{eq:quadratic-rom}
\begin{gathered}
    u = \theta(x) = \*\Phi^\Trp(x - \bar{x}),\quad \widetilde{x} = \phi(u) = \bar{x} + \*\Phi u + u^\Trp\, \overline{\*\Phi} u,\\
    \frac{\mathrm{d} u}{\mathrm{d} t} = a + \*B u + u^\Trp \mathcal{C} u,
    \end{gathered}
\end{equation}
where $\bar{x}$ can be taken to be the mean-field of the data, $\*\Phi$ is a matrix constructed by proper orthogonal decomposition, and $\overline{\*\Phi}$ is a 3-tensor defining the quadratic correction term. The terms $a$, $\*B$ and $\mathcal{C}$ represent the quadratic approximation of the reduced order model.
The quadratic manifolds approach first constructs the dimensionality reduction, and then constructs the dynamics from the data approximating the time derivative of $u$ through finite difference.
Sparsity in the linear operators and 3-tensors can be enforced in a way similar to the sparse identification of non-linear dynamics (SINDy) method~\cite{brunton2016discovering}, though this type of regularization is not explored in this work.

In both the classic approaches above, the encoder is an affine transformation, meaning that the image of $\mathbb{R}^n$ under the encoder is $\theta(\mathbb{R}^n) = \mathbb{R}^r$, and not a compact superset of the reduced order space $\mathbb{U}$. Thus, the encoder is not restricted to producing results in $\mathbb{U}$. This means that they can produce results that are not meaningful to the problem at hand.

\section{Synthetically Constrained Autoencoder}

\begin{figure}
    \centering
    \includegraphics[width=\linewidth]{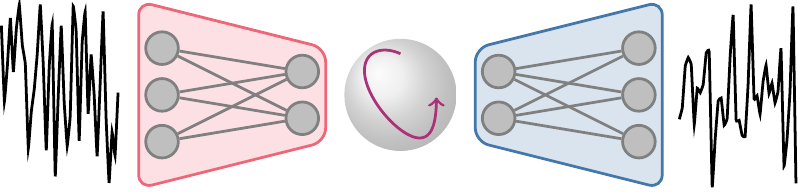}
    \caption{A visual representation of the SyCo-AE framework. From left to right: full order model~\cref{eq:full-order-ODE} data is encoded into the reduced order space, the reduced order model~\cref{eq:reduced-order-IVP} is evolved on the sphere $S^{r-1}$, and then decoded out by the decoder, into the reconstruction. Both left and right curves in this figure are real data from the Lorenz '96 model described in~\cref{sec:numerical-experiments}.}
    \label{fig:SyCo-AE-Visual}
\end{figure}

If we know that the system that we wish to model is chaotic, then, given an encoder that preserves compactness, the reduced order dynamics have to always remain on some compact set.
If they do not then our constructed reduced order model did not take advantage of all our available knowledge, which is a violation of our current best understanding of scientific reasoning~\cite{jaynes2003probability}.
This type of violation is often done deliberately for lack of a better solution.
Our task, therefore, is to somehow include this knowledge into our reduced order model construction.

A straightforward consequence of this reasoning is that the construction of the autoencoder~\cref{eq:autoencoder} cannot be performed independently of the construction of the reduced order dynamics~\cref{eq:reduced-order-IVP}.
We have to not only ensure that the reduced order dynamics produce outputs that are in accordance with the reduced order space induced by the encoder, but also ensure the complement: that the encoder produces outputs that are in accordance to the outputs produced by the reduced order dynamics.

We now formalize the discussion above by defining a few properties, and their straightforward consequences.

\begin{definition}\label{def:strong-preservation-condition}
Ideally the dynamics in~\cref{eq:reduced-order-IVP} evolve in a way such that the final state is always in the reduced order space $u_{i+1}\in\mathbb{U}$, which we term this the \textit{strong preservation condition}.
\end{definition}

We now outline how the strong preservation condition in~\cref{def:strong-preservation-condition} could be satisfied when the attractor is known.

\begin{theorem}\label{prop:strong-condition}
When all the possible data points of the attractor $\mathbb{X}$ are fully known, and $\theta$, $\phi$ and $f$ in the standard autoencoder reduced order model~\cref{eq:autoencoder-ROM} are allowed to have an arbitrary amount of degrees of freedom, then the strong preservation condition in \cref{def:strong-preservation-condition} is preservable.
\end{theorem}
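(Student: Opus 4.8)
The plan is to prove preservability by exhibiting an explicit choice of $\theta$, $\phi$ and $f$ for which every trajectory of~\cref{eq:reduced-order-IVP} that starts in $\mathbb{U}$ remains in $\mathbb{U}$; the idea is to transport the full order dynamics onto the attractor rather than to learn them from scratch. The structural fact I would rely on is that the attractor is forward invariant under the flow $\Phi^t$ generated by $F$ in~\cref{eq:full-order-ODE}: if $x\in\mathbb{X}$ then $\Phi^t(x)\in\mathbb{X}$ for all $t\ge 0$. Since every point of $\mathbb{X}$ is known by hypothesis and $\theta,\phi,f$ are unconstrained in complexity, we are free to take $\theta$ to be an embedding of $\mathbb{X}$ and $f$ to be the vector field it induces.

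\textbf{Construction.} First I would choose the encoder $\theta:\mathbb{R}^n\to\mathbb{R}^r$ to be injective on $\mathbb{X}$; such a map exists as long as $r$ exceeds roughly twice the topological dimension of $\mathbb{X}$ by standard embedding arguments, and in the worst case one may simply take $r=n$ and $\theta=\id$. Put $\mathbb{U}=\theta(\mathbb{X})$, a proper compact subset of $\mathbb{R}^r$ because $\mathbb{X}$ is a compact attractor. Define the decoder on $\mathbb{U}$ to be the inverse of $\theta$ restricted to $\mathbb{X}$, extended to all of $\mathbb{R}^r$ arbitrarily, so that $\phi(\theta(x))=x$ for every $x\in\mathbb{X}$; then $\theta(\phi(\theta(x)))=\theta(x)$, so the right-invertibility requirement~\cref{eq:right-invertibility} holds automatically and $\widetilde{\mathbb{X}}=\phi(\mathbb{U})=\mathbb{X}$. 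Finally set $f$ on $\mathbb{U}$ to be the pushforward of $F$ under $\theta$, that is $f(u)=\left.\tfrac{\mathrm{d}}{\mathrm{d}t}\right|_{t=0}\theta\!\left(\Phi^{t}(\phi(u))\right)$, which equals $D\theta(\phi(u))\,F(\phi(u))$ when the maps involved are differentiable, and extend $f$ off $\mathbb{U}$ so that it is globally Lipschitz, guaranteeing existence and uniqueness of solutions to~\cref{eq:reduced-order-IVP}.

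\textbf{Verification.} With these choices the reduced flow $\psi^t$ generated by $f$ is conjugate to the restricted full flow through $\theta$: on $\mathbb{U}$ one has $\psi^t=\theta\circ\Phi^t\circ\phi$, since $\phi\circ\theta$ is the identity on $\mathbb{X}$ and hence both sides solve the same initial value problem, whose solutions are unique. Then for any $u_i\in\mathbb{U}$ the solution of~\cref{eq:reduced-order-IVP} at time $t_{i+1}$ is $u_{i+1}=\psi^{t_{i+1}-t_i}(u_i)=\theta\!\left(\Phi^{t_{i+1}-t_i}(\phi(u_i))\right)$; because $\phi(u_i)\in\mathbb{X}$ and $\mathbb{X}$ is forward invariant, $\Phi^{t_{i+1}-t_i}(\phi(u_i))\in\mathbb{X}$, and therefore $u_{i+1}\in\theta(\mathbb{X})=\mathbb{U}$. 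This is exactly the strong preservation condition of~\cref{def:strong-preservation-condition}.

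\textbf{Main obstacle.} The delicate point is regularity: chaotic attractors are generically fractal rather than smooth manifolds, so none of $\theta|_{\mathbb{X}}$, its inverse, or the pushforward vector field is automatically differentiable, and the ``arbitrary amount of degrees of freedom'' hypothesis must be read as permission to approximate these idealized objects to any accuracy. I would deal with this either by invoking that hypothesis to allow an idealized, merely continuous $f$, or---more cleanly---by running the argument at the level of the flow map $\psi^{t_{i+1}-t_i}=\theta\circ\Phi^{t_{i+1}-t_i}\circ\phi$, which only requires continuity and is exactly what a numerical integrator of~\cref{eq:reduced-order-IVP} produces; the forward-invariance step is then untouched. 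A secondary point is certifying that $\theta$ genuinely reduces dimension ($r<n$) rather than collapsing to the identity, which is where the embedding-dimension estimate for $\mathbb{X}$ enters, but this is inessential for mere preservability.
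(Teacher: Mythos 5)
Your proposal is correct and rests on the same underlying idea as the paper's proof: with arbitrarily many degrees of freedom the reduced dynamics can be made to reproduce $\theta(x_{i+1})$ exactly, and since the attractor $\mathbb{X}$ is forward invariant under the full flow, $\theta(x_{i+1})\in\theta(\mathbb{X})=\mathbb{U}$. The paper's own proof is a single sentence asserting that the discrepancy between $u_{i+1}$ and $\theta(x_{i+1})$ ``can be zero almost surely''; your explicit construction of the conjugate flow $\psi^t=\theta\circ\Phi^t\circ\phi$, the verification via forward invariance, and the discussion of regularity obstacles (fractal attractors, non-smooth pushforward vector fields) supply precisely the details the paper leaves implicit.
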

\begin{proof}
    With arbitrary many degrees of freedom, the discrepancy of the propagation $u_{i+1}$ and the dimensionality reduced representation of the truth, $\theta(x_{i+1})$, can be zero almost surely for all $x\in\mathbb{X}$, assuming that the autoencoder~\cref{eq:autoencoder} and reduced order dynamics~\cref{eq:reduced-order-IVP} found simultaneously.
\end{proof}

In practice the attractor, $\mathbb{X}$, is not fully known, and in the small-data problem, the known points might be a biased representation thereof. Therefore, the encoder must practically accept all points in $\mathbb{R}^n$ as its input. The set defined by the image of $\mathbb{R}^n$ under the encoder, $\theta(\mathbb{R}^n) = \widehat{\mathbb{U}}$ we call the \textit{total reduced order space}, as it fully covers all possible inputs to the encoder.

\begin{definition}\label{def:weak-preservation-condition}
If the reduced order dynamics~\cref{eq:reduced-order-IVP} evolve in a way such that the final state is always in this total reduced order space $u_{i+1}\in\widehat{\mathbb{U}}$, we term this the \textit{weak preservation condition}. 
\end{definition}

We now motivate our subsequent discussion with the following trivial result.
\begin{theorem}\label{eq:compactness-preservation}
A shallow neural network with one hidden layer,
\begin{equation}
    \nu(x) = \*A_2\,\sigma\!\left(\*A_1 x + \*b_1\right) + \*b_2,
\end{equation}
with continuous activation function $\sigma$, maps compact sets to compact sets.
\end{theorem}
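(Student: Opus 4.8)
The plan is to show that $\nu$ is continuous and then invoke the elementary topological fact that a continuous function maps compact sets to compact sets. First I would observe that $\nu$ is a composition of three maps: the affine map $x \mapsto \*A_1 x + \*b_1$, the coordinate-wise application of $\sigma$, and the affine map $y \mapsto \*A_2 y + \*b_2$. Each affine map on Euclidean space is continuous, and the middle map is continuous precisely because $\sigma$ is assumed continuous (a function $\mathbb{R}^m \to \mathbb{R}^m$ acting as $\sigma$ on each coordinate is continuous iff $\sigma$ is). A composition of continuous functions is continuous, so $\nu$ is continuous.

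Next, I would recall the standard theorem: if $g : \mathbb{R}^p \to \mathbb{R}^q$ is continuous and $K \subset \mathbb{R}^p$ is compact, then $g(K)$ is compact. The usual argument is via open covers — pull back any open cover of $g(K)$ through $g^{-1}$ to get an open cover of $K$, extract a finite subcover by compactness of $K$, and push it forward — or, in the Euclidean setting, via the Heine–Borel characterization: $g(K)$ is bounded because continuous functions are bounded on compact sets, and closed because it is the continuous image of a compact (hence sequentially compact) set, so every convergent sequence in $g(K)$ has its limit in $g(K)$ via a convergent subsequence of preimages.

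Combining these two facts gives the claim immediately: for any compact $K$, $\nu(K)$ is compact. The only thing worth spelling out with any care is the continuity of the coordinate-wise $\sigma$ layer, since that is where the hypothesis on $\sigma$ enters; everything else is boilerplate. I expect no real obstacle here — the statement is flagged as ``trivial'' in the text, and the entire content is the observation that neural networks with continuous activations are continuous, after which compactness preservation is a one-line appeal to a standard theorem.
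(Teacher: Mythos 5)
Your proposal is correct and follows exactly the same route as the paper's own (very terse) proof: decompose $\nu$ as affine $\circ$ coordinate-wise $\sigma$ $\circ$ affine, conclude continuity, and invoke the standard fact that continuous maps preserve compactness. The extra detail you give on the coordinate-wise continuity of the activation layer and on the compactness theorem itself is just a fuller writing-out of the same argument.
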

\begin{proof}
    Compactness is preserved by continuous functions, thus the composition of an affine, continuous, and another  affine function is itself continuous.
\end{proof}

We can attempt to learn $\widehat{\mathbb{U}}$ for a given encoder, and force that the dynamics lie on this set. This is inadequate for one simple reason: there is no guarantee that $\widehat{\mathbb{U}}$ is compact, thus being a poorly positioned solution to the problem of modeling chaotic dynamics.

In this work we aim to solve this problem by turning it on its head: instead of learning $\widehat{\mathbb{U}}$ we instead \textit{synthetically assign} $\widehat{\mathbb{U}}$ to be some compact lower-dimensional manifold embedded in $\mathbb{R}^r$.
We can then restrict both the encoder~\cref{eq:autoencoder} and dynamics~\cref{eq:reduced-order-IVP} to map onto and evolve on this manifold.
This type of restriction can be performed through a simple algebraic constraint. 
We first take the autoencoder reduced order model~\cref{eq:autoencoder-ROM}, and augment it,
\begin{equation}\label{eq:SyCo-AE-ROM}
\framebox{$
    \begin{gathered}
        u = \theta(x),\quad \widetilde{x} = \phi(u),\\
        \frac{\mathrm{d} u}{\mathrm{d} t} = f(u),\quad 0 = g(u),
    \end{gathered}
    $}
\end{equation}
where the addition of the synthetic constraint function $g: \mathbb{R}^r \to \mathbb{R}^s$ with $s \leq r$ has an effect on both the encoder $\theta$ and on the dynamics $f$. The resulting dynamics define a constrained ordinary differential equation~\cite{ascher1998computer}. This synthetically constrained autoencoder (SyCo-AE) is described visually by~\cref{fig:SyCo-AE-Visual}.

The hope is that synthetically constraining the dynamics to a known low dimensional set would act as a source of additional knowledge.
This knowledge should guide the neural networks to be a good approximation of the underlying dynamics with significantly fewer data. We can also alternatively think about implicitly defining a constraint in the full space. We conjecture:
\begin{conjecture}\label{conj:constraint}
   We approximate a complex, not-yet-known, constraint on the full space, $0 = G(x) \approx G(\phi(u))$ for all reduced order states $u$ that satisfy our synthetic constraint $0 = g(u)$.
\end{conjecture}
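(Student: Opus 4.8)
The plan is to make the conjecture precise by exhibiting an explicit candidate for the full-space constraint $G$, proving the idealized (exact) version, and then isolating the approximation errors so that ``$\approx$'' becomes a statement about quantities the SyCo-AE training loss already controls. Let $\mathcal{M} = \phi\!\left(\{v \in \mathbb{R}^r : g(v) = 0\}\right) \subseteq \mathbb{R}^n$ be the image under the decoder of the synthetic constraint manifold (the sphere $S^{r-1}$ in \cref{fig:SyCo-AE-Visual}), and take $G(y) = \operatorname{dist}(y, \mathcal{M})$, or any local defining function of $\mathcal{M}$ when the restriction of $\phi$ to $\{g=0\}$ is an embedding. This $G$ is ``not-yet-known'' because $\mathcal{M}$ is determined only once $\phi$ has been trained, and it is ``complex'' because $\phi$ is a deep network, so $\mathcal{M}$ and its defining function are genuinely complicated objects. (One could instead take $G = g \circ \theta$; because the SyCo-AE encoder is built so that its entire image lies in $\{g = 0\}$ --- and \cref{eq:compactness-preservation} ensures the maps in play are at least continuous --- this choice makes $G \equiv 0$ on all of $\mathbb{R}^n$, which is too strong to be informative, so I prefer the distance version, which vanishes exactly on $\mathcal{M}$.)

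With this choice the second assertion, $G(\phi(u)) = 0$ whenever $g(u) = 0$, holds by construction, since such a point $\phi(u)$ lies in $\mathcal{M}$. For the first assertion, $0 = G(x)$ on the attractor, I would use the autoencoder together with right-invertibility~\cref{eq:right-invertibility}: for $x \in \mathbb{X}$ the reduced state $\theta(x)$ satisfies (exactly, in the idealized SyCo-AE~\cref{eq:SyCo-AE-ROM}) the synthetic constraint, so $\phi(\theta(x)) \in \mathcal{M}$, and therefore $G(x) = \operatorname{dist}(x, \mathcal{M}) \le \norm{x - \phi(\theta(x))}$, the pointwise reconstruction error. Hence the conjecture reduces to two facts: the SyCo-AE attains small reconstruction error uniformly over $\mathbb{X}$, and its encoder satisfies $g(\theta(x)) \approx 0$ uniformly; both are exactly the terms the training cost penalizes, and a small constraint residual $\delta$ only enlarges the bound to $\norm{x - \phi(\theta(x))} + L_\phi\,\delta$ with $L_\phi$ a local Lipschitz constant of $\phi$. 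The combined statement $G(x) \approx G(\phi(u)) \approx 0$ then follows by the triangle inequality, with the geometry of $\mathcal{M}$ (e.g.\ its reach) controlling how the distance function transfers errors.

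The main obstacle --- and the reason this is stated as a conjecture rather than a theorem --- is twofold. Quantitatively, bounding the reconstruction error \emph{uniformly over the whole attractor} from only a biased, small sample of it is a genuine generalization problem: the autoencoder sees finitely many points, and without regularity assumptions on $\mathbb{X}$ and on the network class there is no control away from the training data, which is precisely the small-data regime the paper targets. Conceptually, the conjecture aspires to more than mere existence of $G$: it suggests that this $G$ is a \emph{meaningful, intrinsic} constraint of the full dynamics $F$ --- a faithful proxy for whatever structure keeps the true attractor compact --- rather than an artifact of the particular $\theta$ and $\phi$ that happened to be learned. Making that precise would require extra hypotheses linking $f$, $\theta$, $\phi$ back to $F$ (for instance, that the learned reduced flow is topologically conjugate to $F$ restricted to the attractor), and establishing such a conjugacy is itself open. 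Under strong idealizations --- the attractor is an embedded submanifold carried diffeomorphically onto it by $\phi$, exact reconstruction on $\mathbb{X}$, exact constraint satisfaction by $\theta$ --- the argument above is already a clean proof; the conjecture is the assertion that an approximate version survives when these idealizations are weakened to what training can actually deliver.
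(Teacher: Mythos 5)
The paper offers no proof of this statement: it is introduced with ``We conjecture'' and left entirely open, with only the informal gloss that a known, simple constraint in the reduced space stands in for an unknown, complex constraint in the full space. So there is no argument of the paper's to compare yours against; what I can assess is whether your formalization captures what the conjecture is claiming, and whether your sketch would prove it.

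Your construction proves a different (and much weaker) statement than the one intended. By \emph{defining} $G(y)=\operatorname{dist}\bigl(y,\phi(\{g=0\})\bigr)$, you make $G(\phi(u))=0$ true by fiat, and the entire content collapses to ``the reconstruction error is small on the attractor.'' But the conjecture, read in the direction the paper intends, posits a constraint $G$ that is \emph{intrinsic to the full-order dynamics} --- something whose zero set contains (or is) the attractor and that explains why the true dynamics stay compact --- and claims that the decoded constraint manifold approximately lies inside $\{G=0\}$. That is the reverse inclusion from the one your bound controls. Your argument shows every attractor point $x$ is within reconstruction error of $\mathcal{M}=\phi(\{g=0\})$; it says nothing about whether every point of $\mathcal{M}$ is close to the attractor. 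Nothing in the training loss \cref{eq:partial-cost-function} forces this: the decoder is only ever evaluated at $u=\theta(X_k)$ for training data $X_k$, so $\phi$ is unconstrained on the (possibly large) portion of the sphere $S^{r-1}$ not covered by encoded training points, and could map it arbitrarily far from $\mathbb{X}$. The quantifier ``for all reduced order states $u$ that satisfy $0=g(u)$'' is exactly where the difficulty lives, and your choice of $G$ erases it rather than addressing it. To your credit, your final paragraph essentially concedes this --- you note that the conjecture aspires to $G$ being a meaningful intrinsic constraint rather than an artifact of the learned $\phi$ --- but the body of your proposal then proves only the artifact version. A faithful formalization would need a two-sided (Hausdorff-type) closeness between $\phi(\{g=0\})$ and the attractor, or at minimum an argument that the encoder's image $\theta(\mathbb{X})$ is dense in $\{g=0\}$ together with continuity moduli for $\phi$; neither is available from the paper's hypotheses, which is presumably why the statement is a conjecture.
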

In other words, by introducing a \textit{known and simple} constraint on the reduced order dynamics, we learn some \textit{unknown and complex} constraint of the full order dynamics. Note that the range of $G$ and $g$ might not have the same dimension, meaning that the optimal choice of $g$ should somehow be informed by the dimension of the range of the unknown $G$.

One simple way to solve enforce the constraint $g$ in~\cref{eq:SyCo-AE-ROM} for both the encoded and dynamics is through projection. Given the constraint $g$, we can pose the projection operator as minimizing a cost,
\begin{equation}\label{eq:projection}
    \Pi(u) = \argmin_{\widehat{u}} \left\lVert\widehat{u} - u\right\rVert_2^2,\qquad\text{such that}\quad 0 = g(\widehat{u}),
\end{equation}
which for certain choices of $g$ can have a closed form solution. 
Given an arbitrary encoder $\widehat{\theta}$, the projected autoencoder is given by,
\begin{equation}\label{eq:projected-encoder}
    \theta(x) = \Pi\left(\widehat{\theta}(x)\right).
\end{equation}

In this work for the solution of all continuous dynamical systems, in the interest of speed, we utilize the projected adaptive time-step explicit trapezoidal rule,
\begin{equation}\label{eq:adaptive-trapezoidal-rule}
\begin{gathered}
k_1 = h_j\, f(u_j),\quad k_2 = h_j\, f(u_j + k_1),\\
    u_{j+1} = \Pi\left[u_j + \frac{1}{2}(k_1 + k_2)\right],\quad
    \widetilde{u}_{j+1} = \Pi\left[u_j + k1\right],
\end{gathered}
\end{equation}
where 
$h_j$ is the internal time-step of the solver, 
$u_{j+1}$ is the next state at time $t_j + h$, 
$\widetilde{u}_{j+1}$ is the solution of the `embedded' method used to adaptively determine $h_j$ through the canonical method described in~\cite{hairer1991solving},
and the function $\Pi$ from~\cref{eq:projection} projects the solutions onto the manifold defined by $g$ in~\cref{eq:SyCo-AE-ROM} or is the identity function for all other methods. The ideas for this method are a simple combination of ideas found in~\cite{hairer1991solving,Hairer2,hairer2006geometric,ascher1998computer}.


We now turn our attention to the choice of $\widehat{\mathbb{U}}$, implicitly defined by the constraint $g$ in~\cref{eq:SyCo-AE-ROM}. We list a few properties that are nice to have:
(i) $\widehat{\mathbb{U}}$ is a compact (and connected) subset of $\mathbb{R}^r$,
(ii) its topological dimension~\cite{heinonen2001lectures} is as close to $r$ as possible, and
(iii) the projection $\Pi$ in~\cref{eq:projection} has a closed form solution.
One simple candidate for such a set is the sphere $S^{r-1}$, with corresponding constraint and projection of,
\begin{equation}\label{eq:sphere}
    g(u) = \norm{u}_2 - 1,\quad \Pi(u) = \frac{u}{\norm{u}_2},
\end{equation}
that has topological dimension of $r-1$. We make use of the spherical constraint~\cref{eq:sphere} for the remainder of this work, and table the discussion of other possible constraints for future work.

We organize the data into trajectories, with each trajectory, $\{X_k\}_{k=0}^K$, consisting of data at times $\{t_k\}_{k=0}^K$. 
In order to effectively learn the dynamics we explicitly \textit{roll out}~\cite{uy2022operator}~\cref{eq:SyCo-AE-ROM}, by explicitly solving the constrained initial value problem during training.
The degree to which roll out is performed is determined by the length of the trajectory $K$, thus we term $K$ the roll out parameter.
We write $\mathcal{M}_{t_k\to t_{k+1}}(u_k)$ for the solution of the constrained initial value problem~\cref{eq:SyCo-AE-ROM} with an algorithm such as~\cref{eq:adaptive-trapezoidal-rule}.
Note that each trajectory can be of variable length and with variable inter-trajectory time-step, however this idea is not explored in this work.

The cost function, therefore has to take into account four distinct parts: 
(i) the autoencoder error~\cref{eq:autoencoder}, 
(ii) the right-invertability condition~\cref{eq:right-invertibility},
(iii) the error of the dynamics~\cref{eq:SyCo-AE-ROM} in the full space (through the decoder),
(iv) the error of the dynamics~\cref{eq:SyCo-AE-ROM} in the reduced order space, required by~\cref{prop:strong-condition}.
The resulting cost function,
\begin{equation}\label{eq:partial-cost-function}
\begin{aligned}
    \left.\ell_i(\{t_k\},\{X_k\})\right|_W = 
    &\phantom{+\,\,\omega} \sum_{k=0}^K\underbrace{\norm{X_k - \phi(\theta(X_k))}^2_2}_{\text{Autoencoder error}}\,\, +\,\,\omega\sum_{k=0}^K\underbrace{\norm{\theta(X_k) - \theta(\phi(\theta(X_k)))}^2_2}_{\text{Right-inverse error}}\\
    &+\phantom{\omega}\sum_{k=1}^K e^{-2\lambda(t_k - t_0)}\underbrace{\norm{X_k - \phi(\mathcal{M}_{t_0\to t_k}(\theta(X_0)))}^2_2}_{\text{Full space dynamics error}}\\
    &+\upsilon\sum_{k=1}^K e^{-2\lambda(t_k - t_0)}\underbrace{\norm{\theta(X_k) - \mathcal{M}_{t_0\to t_k}(\theta(X_0))}^2_2}_{\text{Reduced space dynamics error}}
\end{aligned},
\end{equation}
is implicitly defined in terms of the neural network weights $W$. The constant $\omega$ manipulated the the right inverse error enabling the autoencoder to be weakly right-invertible~\cref{eq:right-invertibility}. A large choice, $\omega= 10^2$, has been shown~\cite{popov2023multifidelity} to significantly improve the performance of autoencoder-based reduced order models. This value of $\omega$ is used in this work.
For chaotic systems, the dominant error of the system grows exponentially on average, thus the errors at different times forward into the model have to be scaled accordingly. 
This accumulation of the error is regulated by $\lambda$ which can be thought of as the approximation of the largest Lyapunov exponent (LLE)~\cite{parker2012practical} for the given dynamics. 
For a system that is not chaotic, and which can be exactly reconstructed by a reduced order model of dimension $r$, the LLE can be set to $\lambda = 0$, meaning that we assume there is no accumulation of error forward in time, which might not strictly be the case.
For chaotic systems, $\lambda$ is either known, or can be tuned as a hyperparameter.
Finally, $\upsilon$ is a parameter that scales the loss in the reduced order space with respect to the autoencoder loss.

Taking a collection of $I$ trajectories, $\{\{X_k\}_i\}_{i=1}^I$, the full cost function is the expected value with respect to all the trajectories,
\begin{equation}\label{eq:full-cost-function}
    \!L(W) = \mathbb{E}\left[\left.\ell_i(\{t_k\}_i, \{X_k\}_i)\right\rvert_W\right].
\end{equation}


\section{Numerical Experiments}
\label{sec:numerical-experiments}

\pgfplotsset{clean/.style={axis lines*=left,
        axis on top=true,
        axis x line shift=0.0em,
        axis y line shift=0.75em,
        every tick/.style={black, thick},
        axis line style = ultra thick,
        tick align=outside,
        clip=false,
        major tick length=4pt},
log x ticks with fixed point/.style={
      xticklabel={
        \pgfkeys{/pgf/fpu=true}
        \pgfmathparse{exp(\tick)}%
        \pgfmathprintnumber[fixed relative, precision=3]{\pgfmathresult}
        \pgfkeys{/pgf/fpu=false}
      }
  },
  log y ticks with fixed point/.style={
      yticklabel={
        \pgfkeys{/pgf/fpu=true}
        \pgfmathparse{exp(\tick)}%
        \pgfmathprintnumber[fixed relative, precision=3]{\pgfmathresult}
        \pgfkeys{/pgf/fpu=false}
      }
  }}

\begin{figure}
    \centering
    \includegraphics[width=\linewidth]{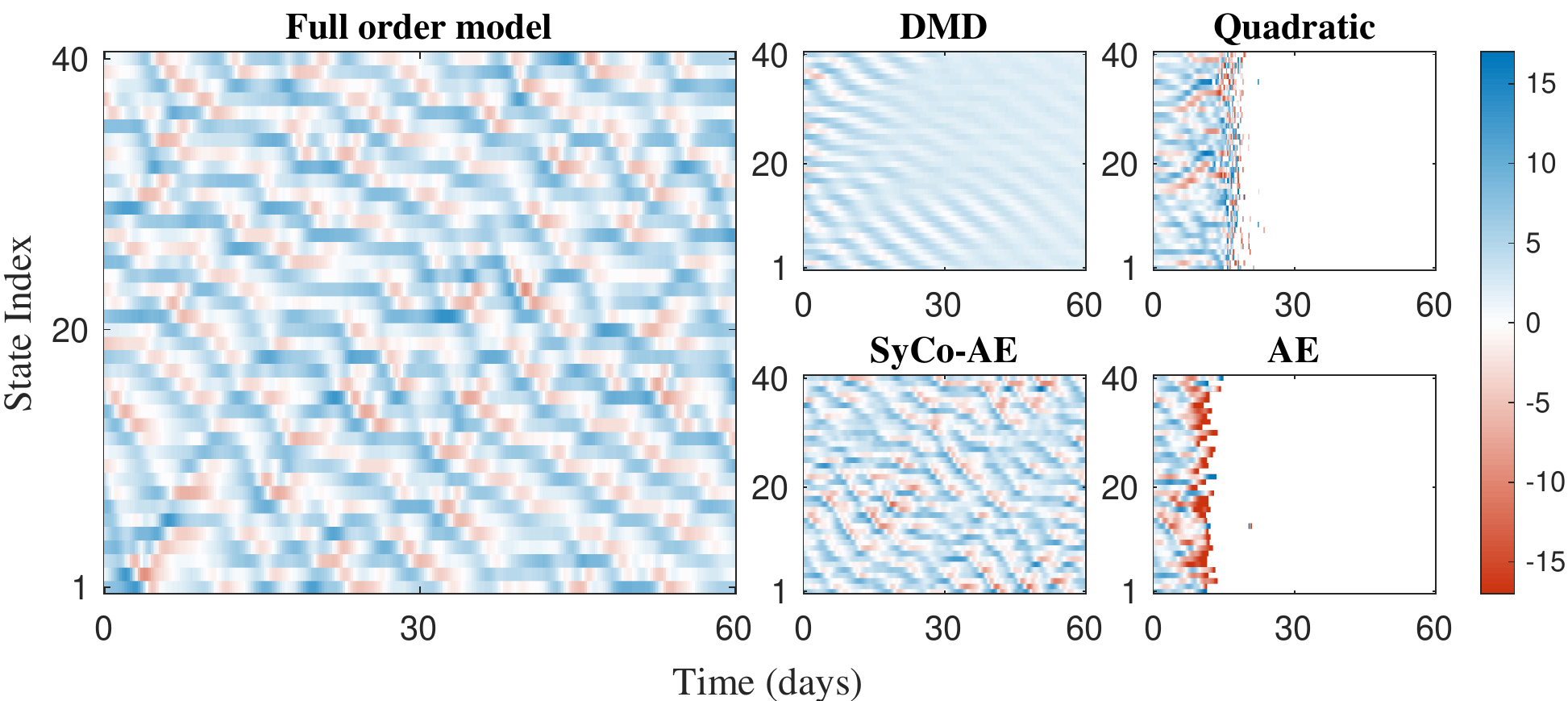}
    \caption{Qualitative representation of the forecasting accuracy of the various ROM methods as compared to the full order Lorenz '96 model. The same initial condition is taken for all models and propagated for 60 days. The $x$-axis is the time in days, and the $y$-axis is the state-space index of each variable.}
    \label{fig:L96-flow}
\end{figure}

\begin{figure}[tp]
    \centering
    \subfigure[$N=100$ data points]{
    \begin{tikzpicture}
    \begin{semilogyaxis}[clean,
        cycle list name=tol,
        xtick={1,2,...,10},
        table/col sep=comma,
        xmin = 1,
        xmax = 10,
        ymin = 10,
        ymax = 1e6,
        restrict expr to domain={y}{10:1e6},
        unbounded coords=discard,
        xlabel = {Time (days)},
        ylabel = {Testing KL-divergence},
        every axis plot/.append style={line width=2pt, mark size=3.5pt},
        legend style={at={(0.6,0.85)},anchor=center},
        legend cell align={left},
        legend columns={2},
        legend to name={leg:forecast},
        scale only axis, width=0.3\textwidth]
    \addplot table [x=daysfull, y=DMDro1, col sep=comma] {dataN/DMD_N100.csv};
    \addlegendentry{DMD };
    \addplot table [x=daysfull, y=QUADro1, col sep=comma] {dataN/QUAD_N100.csv};
    \addlegendentry{Quadratic };
    \addplot table [x=daysfull, y=SyCoAEro1h2000, col sep=comma] {dataN/SyCoAE_N100.csv};
    \addlegendentry{SyCo-AE };
    \addplot table [x=daysfull, y=AEro1h2000, col sep=comma] {dataN/AE_N100.csv};
    \addlegendentry{AE };
    \end{semilogyaxis}
    \end{tikzpicture}
    }%
    \subfigure[$N=500$ data points]{
    \begin{tikzpicture}
    \begin{semilogyaxis}[clean,
        cycle list name=tol,
        xtick={1,2,...,10},
        table/col sep=comma,
        xmin = 1,
        xmax = 10,
        ymin = 10,
        ymax = 1e6,
        restrict expr to domain={y}{10:1e6},
        unbounded coords=discard,
        xlabel = {Time (days)},
        ylabel = {},
        every axis plot/.append style={line width=2pt, mark size=3.5pt},
        legend style={at={(0.6,0.85)},anchor=center},
        legend cell align={left},
        legend columns={2},
        scale only axis, width=0.3\textwidth]
    \addplot table [x=daysfull, y=DMDro1, col sep=comma] {dataN/DMD_N500.csv};
    \addlegendentry{DMD };
    \addplot table [x=daysfull, y=QUADro1, col sep=comma] {dataN/QUAD_N500.csv};
    \addlegendentry{Quadratic };
    \addplot table [x=daysfull, y=SyCoAEro1h2000, col sep=comma] {dataN/SyCoAE_N500.csv};
    \addlegendentry{SyCo-AE };
    \addplot table [x=daysfull, y=AEro1h2000, col sep=comma] {dataN/AE_N500.csv};
    \addlegendentry{AE };
    \legend{};
    \end{semilogyaxis}
    \end{tikzpicture}
    }
    \subfigure[$N=1000$ data points]{
    \begin{tikzpicture}
    \begin{semilogyaxis}[clean,
        cycle list name=tol,
        xtick={1,2,...,10},
        table/col sep=comma,
        xmin = 1,
        xmax = 10,
        ymin = 10,
        ymax = 1e6,
        restrict expr to domain={y}{10:1e6},
        unbounded coords=discard,
        xlabel = {Time (days)},
        ylabel = {Testing KL-divergence},
        every axis plot/.append style={line width=2pt, mark size=3.5pt},
        legend style={at={(0.6,0.85)},anchor=center},
        legend cell align={left},
        legend columns={2},
        scale only axis, width=0.3\textwidth]
    \addplot table [x=daysfull, y=DMDro1, col sep=comma] {dataN/DMD_N1000.csv};
    \addlegendentry{DMD };
    \addplot table [x=daysfull, y=QUADro1, col sep=comma] {dataN/QUAD_N1000.csv};
    \addlegendentry{Quadratic };
    \addplot table [x=daysfull, y=SyCoAEro1h2000, col sep=comma] {dataN/SyCoAE_N1000.csv};
    \addlegendentry{SyCo-AE };
    \addplot table [x=daysfull, y=AEro1h2000, col sep=comma] {dataN/AE_N1000.csv};
    \addlegendentry{AE };
    \legend{};
    \end{semilogyaxis}
    \end{tikzpicture}
    }%
    \subfigure[$N=5000$ data points]{
    \begin{tikzpicture}
    \begin{semilogyaxis}[clean,
        cycle list name=tol,
        xtick={1,2,...,10},
        table/col sep=comma,
        xmin = 1,
        xmax = 10,
        ymin = 10,
        ymax = 1e6,
        restrict expr to domain={y}{10:1e6},
        unbounded coords=discard,
        xlabel = {Time (days)},
        ylabel = {},
        every axis plot/.append style={line width=2pt, mark size=3.5pt},
        legend style={at={(0.6,0.85)},anchor=center},
        legend cell align={left},
        legend columns={2},
        scale only axis, width=0.3\textwidth]
    \addplot table [x=daysfull, y=DMDro1, col sep=comma] {dataN/DMD_N5000.csv};
    \addlegendentry{DMD };
    \addplot table [x=daysfull, y=QUADro1, col sep=comma] {dataN/QUAD_N5000.csv};
    \addlegendentry{Quadratic };
    \addplot table [x=daysfull, y=SyCoAEro1h2000, col sep=comma] {dataN/SyCoAE_N5000.csv};
    \addlegendentry{SyCo-AE };
    \addplot table [x=daysfull, y=AEro1h2000, col sep=comma] {dataN/AE_N5000.csv};
    \addlegendentry{AE };
    \legend{};
    \end{semilogyaxis}
    \end{tikzpicture}
    }
    \ref*{leg:forecast}
    \caption{Medium-range forecasting experiment for the Lorenz '96 equations and the corresponding ROMs of dimension $r=28$ for various amounts of training data.
    For each subfigure, the $x$-axis represents the forecasting time in days, ranging from one to 10, and the $y$-axis represents the KL-divergence~\cref{eq:KL-devergence} of the forecasted data points from the truth. Subfigure (a) has results for models trained on $N=100$ data points, (b) on $N=500$ data, (c) on $N=1000$ data, and (d) on $N=5000$ data.}
    \label{fig:lorenz96-forecast}  
\end{figure}
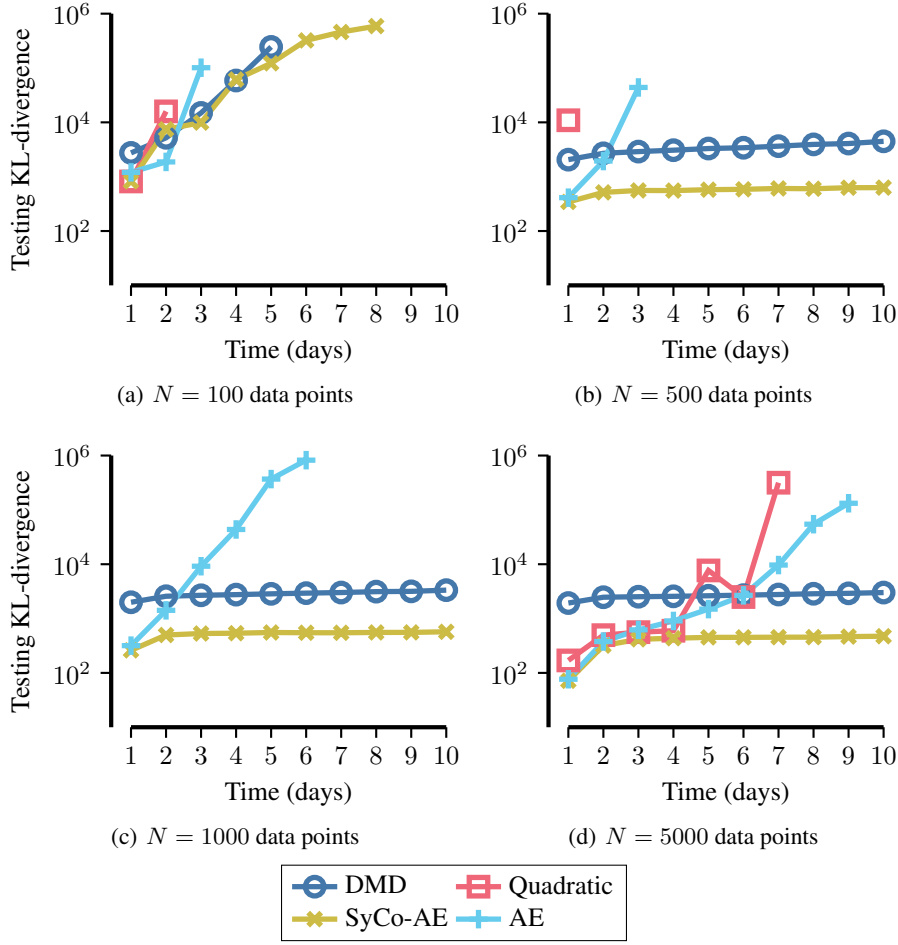

For our numerical test case we take the Lorenz '96 equations~\cite{lorenz1996predictability},
\begin{equation}
    \frac{\mathrm{d} x_j}{\mathrm{d} t} = -x_{j-1}\left(x_{j-2} - x_{j+1}\right) - x_j + F, \quad j\in[1,\dots, 40],
\end{equation}
with cyclic boundary conditions in $j$, and external force $F = 8$.
The implementation of the Lorenz '96 equations used in this work is taken from the ODE Test Problems~\cite{roberts2019ode} package.

The Lorenz '96 equations are a foundational test-bed for algorithms related to numerical weather prediction and data assimilation~\cite{reich2015probabilistic,asch2016data}.
They aim to represent a quantity of interest such as pressure along a slice of fixed latitude on the earth.
We aim to test the reduced order models' ability to perform both medium-range (five to 10 days) and long range ($>10$ days) forecasting~\cite{kalnay2003atmospheric}.

While the Lorenz '96 equations are widely used, it is extremely important to note that it has recently been shown that the equations do not represent a discretization of some partial differential equation model~\cite{van2018dynamics}.

The Lorenz '96 model has a known Kaplan-Yorke dimension of $27.1$ and largest Lyapunov exponent of approximately $\lambda \approx 1.6852$, which were independently computed using a method found in~\cite{dieci2011numerical}. 
The value of $r$ is generally chosen based on computational limitations, and is thus not treated as a hyperparameter.
For these reasons, we focus on the interesting case of $r=28$ for the size of the reduced order model, though preliminary experiments were performed on other values of $r$. 

We compare the  SyCo-AE method~\cref{eq:SyCo-AE-ROM} proposed in this work with the standard autoencoder reduced order model~\cref{eq:autoencoder-ROM}, with the linear DMD method~\cref{eq:DMD-ROM}, and with the non-linear Quadratic manifolds~\cref{eq:quadratic-rom} method.
Both the autoencoder-based methods are trained to minimize the full cost function~\cref{eq:full-cost-function} (made up of~\cref{eq:partial-cost-function}) using the ADAM~\cite{kingma2014adam} method with parameters $\beta_1 = 0.9$ and $\beta_2=0.95$, for 1000 epochs, with no minibatching, and a cyclic learning rate scheduler~\cite{smith2017cyclical} in order to eliminate learning rate as a hyperparameter.

The data collected are trajectories of points collected six hours apart, with each trajectory starting 30 days from the start of the previous, with 0.2 time units corresponding to one day in the model. Not all data points and trajectories are utilized in order to test the methods in the small-data regime.

We train our models on various amounts of data points $N$. We take $N=100$, $N=500$, $N=1000$ and $N=5000$ data points, attempting to range from extremely small, to reasonably large. 
We take two different values of the roll out factor, $K$ in~\cref{eq:partial-cost-function}, namely  $K = 1$ and $K = 9$. 
Note that $N$ does not represent the amount of trajectories that we take, but instead represents the total amount of data points.
For example, for $N=100$, a roll out of $K = 1$ would correspond to 50 trajectories consisting of two data points each, while a roll out of $K = 9$ corresponds to 10 trajectories of 10 points each.
In our testing the roll-out factor $K$ did not play a significant role, thus all results are presented with the roll out factor $K=1$, though data for $K=9$ are available.

For both the autoencoder-based methods, we use a fully connected shallow neural network architecture as described in~\cref{eq:compactness-preservation}, for the encoder $\theta$, the decoder $\phi$, and the reduced order dynamics $f$ in~\cref{eq:autoencoder-ROM} and~\cref{eq:SyCo-AE-ROM}.
For the hidden layer size we take $H=2000$, though models with $H=500$ were also constructed with slightly worse performance.
The encoder for the SyCo-AE method is additionally projected as in~\cref{eq:projected-encoder}. 
As~\cref{eq:compactness-preservation} requires a continuous activation function, we take the GELU function~\cite{hendrycks2016gaussian} as our nonlinearity.

All computation was performed on a MacBook M2 Pro laptop, restricting the number of models that could be trained, thus not all hyperparameters could be optimized for.

For the first experiment we look at the qualitative features of all the methods, as compared to the true full order model for a long range forecast of 60 days. 
We take all models trained on $N=5000$ data point with $K=1$, and $H=2000$ where applicable and forecast a single state outside of the training set.

The results, shown in~\cref{fig:L96-flow}, reveal that the full order model exhibits quasi-periodic behavior, typical of chaotic systems. DMD~\cref{eq:DMD-ROM} exhibits decaying behavior, confirmed by the eigenvalues of $\*\Omega$ all having negative real part. Quadratic manifold~\cref{eq:quadratic-rom} appears to exhibit the correct behavior for approximately 15 days, then appears to catastrophically diverge, which is a known problem in quadratic approximations to dynamical systems~\cite{kaptanoglu2021promoting}. The standard autoencoder approach also appears to have the correct behavior for about five days, but also appears to catastrophically diverge. Only the SyCo-AE method appears to have the same quasi-periodic behavior as the full order model for the full 60 days of the long-range forecast.

In our final experiment we look at the quantitative difference in medium-range forecasting by the reduced order models.
There is evidence to suggest that the Lorenz '96 system is ergodic~\cite{fatkullin2004computational}, roughly meaning that any spatial uncertainty is equivalent to temporal uncertainty.
We make use of the approximate KL divergence~\cite{kullback1951information,schulman2020approximating} of the data distribution $p$ to the true distribution $q$ in the following numerically stable way, 
\begin{equation}\label{eq:KL-devergence}
    \DKL{p}{q} \approx \frac{1}{M}\sum_{x_i\sim p, i =1}^M \frac{1}{2}\left(
    \log q(x) - \log p(x)\right)^2,
\end{equation}
with $M$ representing the number of samples taken, and the distributions approximated by Gaussian mixture models computed through kernel density estimation~\cite{silverman1986density}.

We propagate $M=10000$ states of the full order and reduced order models forward in time from one to 10 days, and calculate the KL-divergence over time for various data sizes.
The results in \cref{fig:lorenz96-forecast} show a clear distinction between the four different methods.

All methods perform poorly for $N=100$ data points, with SyCo-AE~\cref{eq:SyCo-AE-ROM} being the most stable, though not by much.
For $N=500$ and $N=1000$ data points, both DMD~\cref{eq:DMD-ROM} and SyCo-AE~\cref{eq:SyCo-AE-ROM} reach a steady state error, with SyCo-AE having a KL-divergence of an order of magnitude less. 
The standard autoencoder method is able to produce useful two day forecasts, but then diverges.
For $N=5000$ data points the quadratic manifolds method and the standard autoencoder are finally able to produce useful forecasts of about five to six days, but still catastrophically diverge afterwards.

\section{Conclusions and Limitations}

We provided a new framework for reduced order modeling of chaotic systems with neural networks, synthetically constrained autoencoders (SyCo-AE)~\cref{eq:SyCo-AE-ROM} that augments a standard autoencoder-based ROM~\cref{eq:autoencoder-ROM} with a synthetic constraint.
This simple synthetic constraint in the reduced space stands in as a proxy for an unknown constraint of the system in the full space.
Thus, by learning the autoencoder and reduced order dynamics restricted to the simple manifold determined by the synthetic constraint, we implicitly learn the manifold of the full order system.

We test our method on the Lorenz '96 equations. Results show that the SyCo-AE approach can create stable reduced order models capable of medium-range forecasting that are more accurate than methods that do not enforce a compact constraint on the total reduced order space.
Additionally, results show that the amount of data needed to train useful reduced order models is significantly less than other non-linear methods, and seems to be the same as required to train a linear method.

Some of the limitations of this work are as follows.
Alternative choices of the the synthetic constraint $g$ are not explored, and a robust justification of the choice of the sphere is absent.
A more robust exploration of the tunable hyperparameters and neural network architecture is required.
The method in this work was only tested on a simple 40-variable problem, do to computational constraints. 
A larger exploration of more models, including more geophysically realistic models would strengthen the arguments for or against the current work.

Future work would more formally explore the connection between a spherical synthetic constraint $g$ and spherical embedding~\cite{kosinski2013differential}.

\begin{ack}
The first author would like to thank Reid Gomillion and Steven Roberts for their insightful discussion on time integration methods.
This work was sponsored in part by AFOSR (Air Force Office of Scientific Research) under contract number FA9550-22-1-0419.
\end{ack}

\bibliographystyle{plain}
\bibliography{bib/chaos,bib/differentialgeometry,bib/L96,bib/qg, bib/nn,bib/multifidelity,bib/flowmap,bib/filteringgeneral,bib/timeintegration,bib/dimensionalityreduction,bib/control}


\end{document}